\definecolor{iccvblue}{rgb}{0.21,0.49,0.74}
\definecolor{green}{rgb}{0.18,0.55,0.34}
\definecolor{blue}{RGB}{60,132,196}
\definecolor{red}{RGB}{207,78,56}
\definecolor{gray}{RGB}{146,146,161}
\definecolor{green4}{RGB}{46, 139, 87}
\definecolor{red2}{RGB}{149,9,30}
\definecolor{lightgray}{gray}{0.9}
\newtheorem{theorem}{Theorem}
\DeclareMathAlphabet{\mathpzc}{OT1}{pzc}{m}{it}
\definecolor{pink1}{RGB}{239 41 140}
\definecolor{pink}{RGB}{30,144,255}
\definecolor{BarrierColor}{RGB}{25, 162, 30}
\definecolor{BicycleColor}{RGB}{79, 241, 199}
\definecolor{BusColor}{RGB}{198, 1, 90}
\definecolor{CarColor}{RGB}{150, 232, 149}
\definecolor{ConstVehColor}{RGB}{147, 83, 113}
\definecolor{MotorcycleColor}{RGB}{143, 43, 251}
\definecolor{PedestrianColor}{RGB}{9, 41, 148}
\definecolor{TrafficConeColor}{RGB}{86, 155, 103}
\definecolor{TrailerColor}{RGB}{118, 122, 211}
\definecolor{TruckColor}{RGB}{46, 6, 61}
\definecolor{DrivableSurfColor}{RGB}{185, 217, 16}
\definecolor{OtherFlatColor}{RGB}{142, 44, 241}
\definecolor{SidewalkColor}{RGB}{242, 218, 92}
\definecolor{TerrainColor}{RGB}{225, 12, 72}
\definecolor{ManmadeColor}{RGB}{111, 209, 144}
\definecolor{VegetationColor}{RGB}{182, 34, 111}
\providecommand{\ie}{\textit{\ie}}
\providecommand{\eg}{\textit{\eg}}
\providecommand{\vs}{\textit{v.s.}}
\providecommand{\wrt}{\it{w.r.t.}}
\pgfplotsset{compat=1.18}
\title{Semantic Causality-Aware Vision-Based 3D Occupancy Prediction} 
\author{
Dubing Chen$^{1}$,
Huan Zheng$^{1}$,
Yucheng Zhou$^1$, \\
Xianfei Li$^2$, 
Wenlong Liao$^2$, 
Tao He$^2$, 
Pai Peng$^2$,
Jianbing Shen$^{1}$\textsuperscript{\Letter}\\
$^1$SKL-IOTSC, CIS, University of Macau \hspace{4mm}    $^2$COWAROBOT Co. Ltd. 
\\
\href{https://github.com/cdb342/CausalOcc}{\textcolor{pink1}{\texttt{https://github.com/cdb342/CausalOcc}}
}
}
\begin{document}
\maketitle

\begin{abstract}
\let\thefootnote\relax\footnotetext{\Letter \ Corresponding author: \textit{Jianbing Shen}. This work was supported in part by the Science and Technology Development Fund of Macau SAR (FDCT) under grants 0102/2023/RIA2 and 0154/2022/A3 and
001/2024/SKL and CG2025-IOTSC, the University of Macau SRG2022-00023-IOTSC grant, and the Jiangyin Hi-tech Industrial Development Zone under the Taihu Innovation Scheme (EF2025-00003-SKL-IOTSC).}

Vision-based 3D semantic occupancy prediction is a critical task in 3D vision that integrates volumetric 3D reconstruction with semantic understanding. Existing methods, however, often rely on modular pipelines. These modules are typically optimized independently or use pre-configured inputs, leading to cascading errors. 
In this paper, we address this limitation by designing a novel causal loss that enables holistic, end-to-end supervision of the modular 2D-to-3D transformation pipeline. Grounded in the principle of 2D-to-3D semantic causality, this loss regulates the gradient flow from 3D voxel representations back to the 2D features. Consequently, it renders the entire pipeline differentiable, unifying the learning process and making previously non-trainable components fully learnable. 
Building on this principle, we propose the Semantic Causality-Aware 2D-to-3D Transformation, which comprises three components guided by our causal loss: Channel-Grouped Lifting for adaptive semantic mapping, Learnable Camera Offsets for enhanced robustness against camera perturbations, and Normalized Convolution for effective feature propagation. Extensive experiments demonstrate that our method achieves state-of-the-art performance on the Occ3D benchmark, demonstrating significant robustness to camera perturbations and improved 2D-to-3D semantic consistency.
\end{abstract}

\begin{figure}[t]
	\centering
    \subfloat[An Example of Inaccurate 2D-to-3D Transformation]{%
        \includegraphics[width=0.47\textwidth]{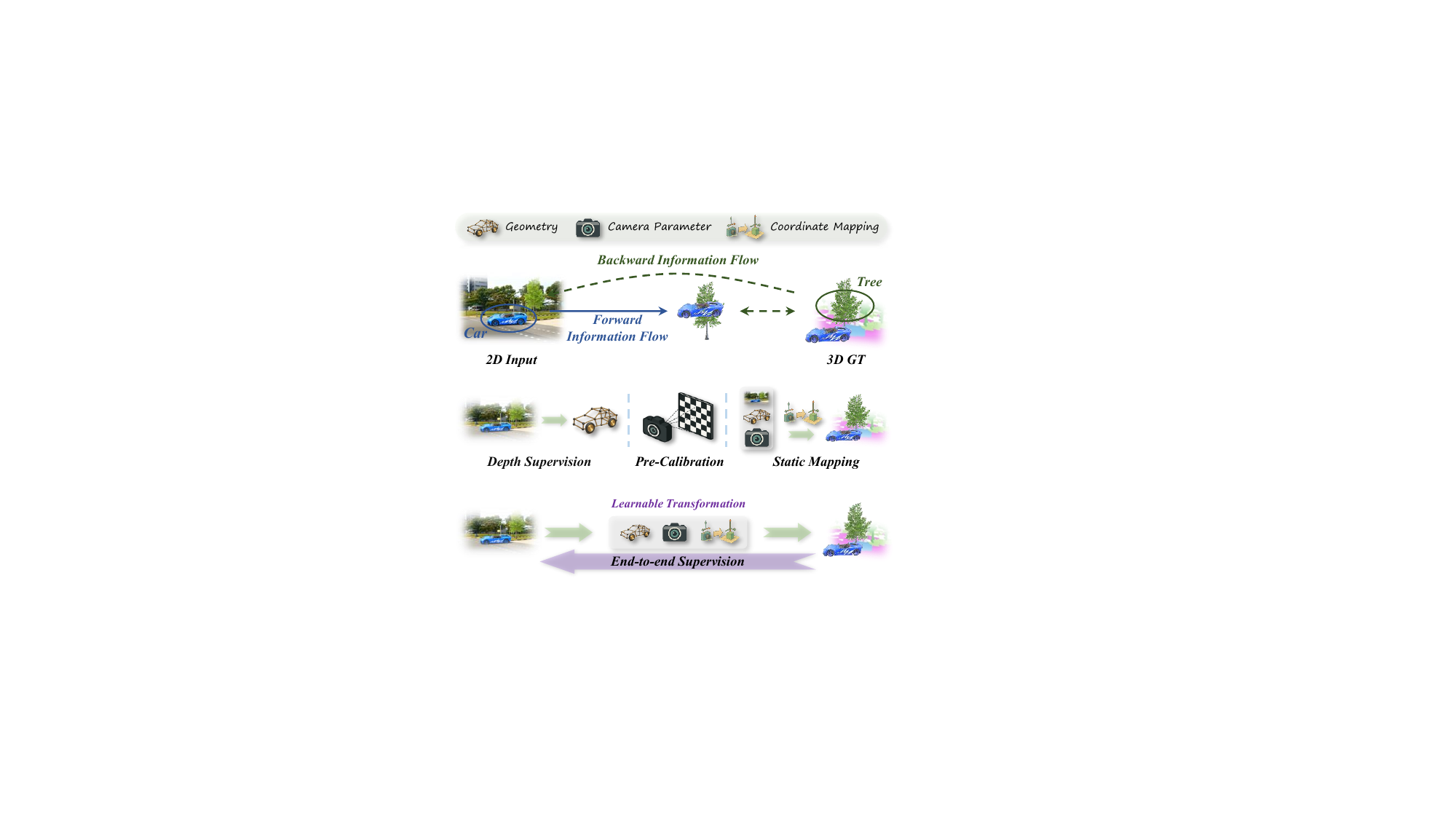}
        \label{fig:motivation_a}
    }\\
    \subfloat[Modular 2D-to-3D Transformation]{%
        \includegraphics[width=0.47\textwidth]{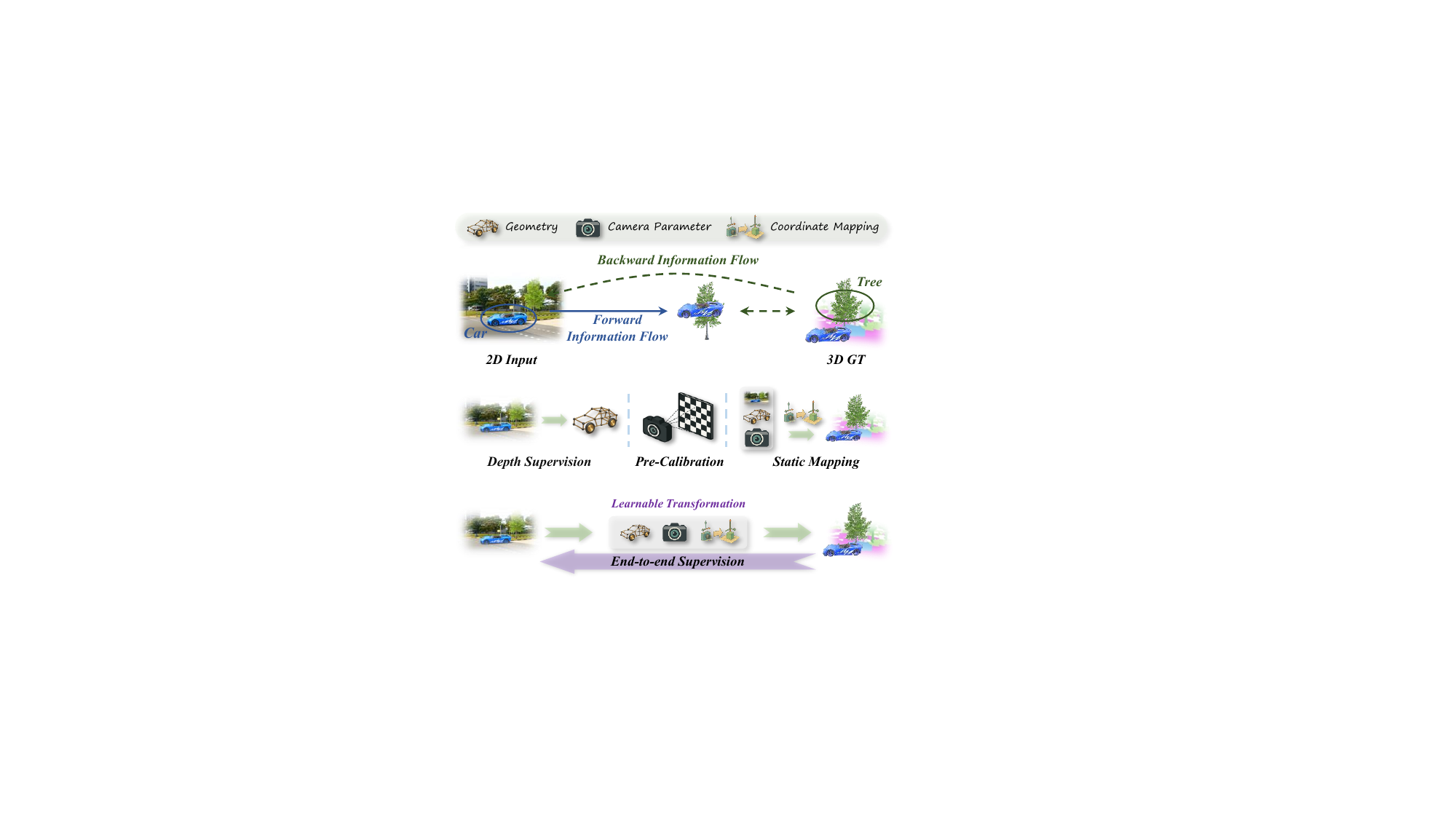}
        \label{fig:motivation_b}
    } \\
    
    \subfloat[Our End-to-End Supervised 2D-to-3D Transformation]{%
        \includegraphics[width=0.47\textwidth]{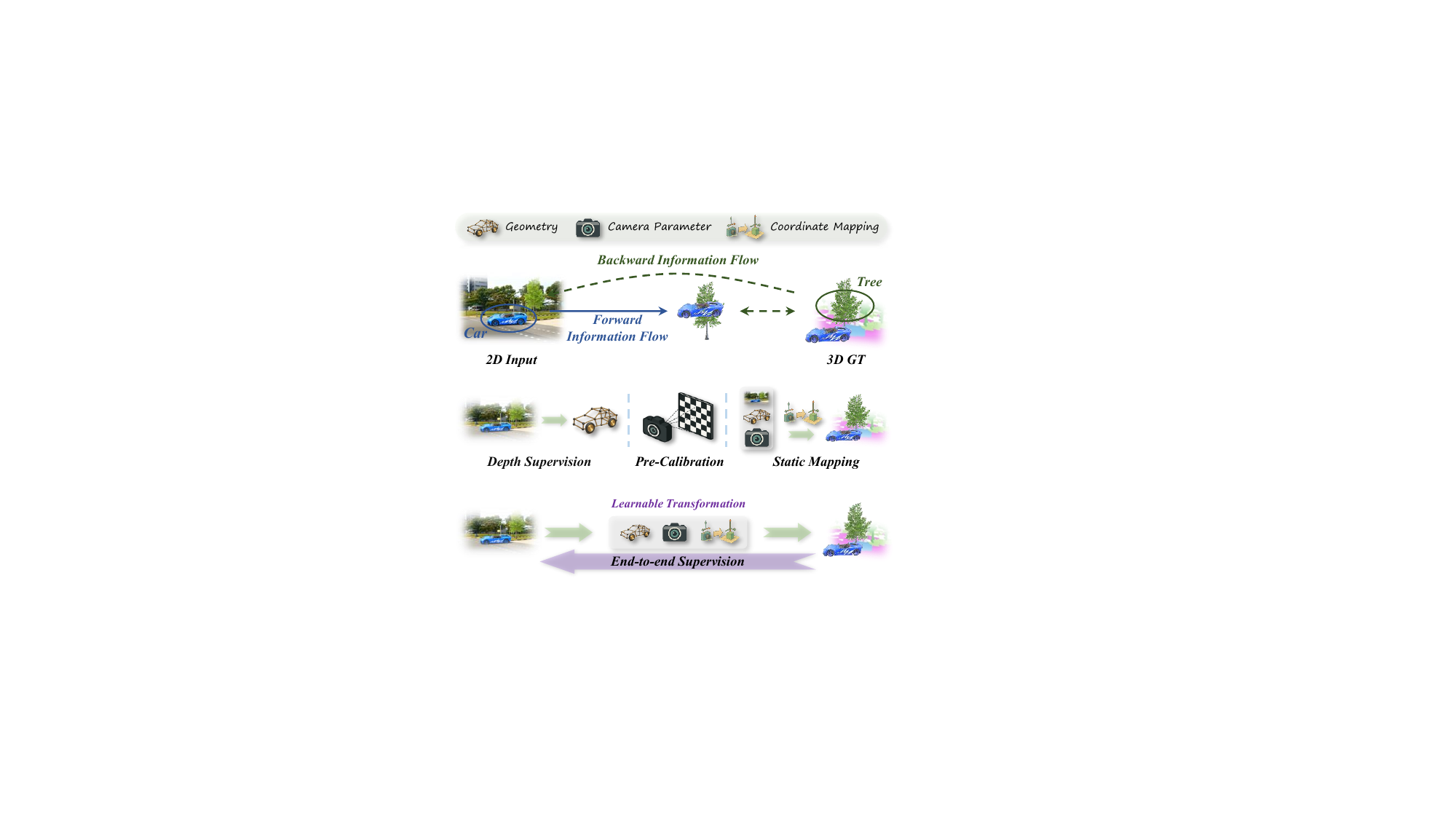}
        \label{fig:motivation_c}
    } 
    
	\caption{\textbf{(a) Illustrates a Visual Analysis of Semantic Ambiguity in VisionOcc.} Inaccurate 2D-to-3D transformations may lead to positional shifts, misaligning supervision signals and resulting in semantic ambiguity.
\textbf{(b) Depicts the Conventional Modular 2D-to-3D Transformation Paradigm }\cite{huang2021bevdet,ma2023cotr,li2023fbocc,chen2024alocc}, which employs depth supervision for geometry estimation, pre-calibrated camera parameters, and fixed mapping for lifting.
\textbf{(c) Presents Our Holistic, End-to-End Supervised 2D-to-3D Transformation Paradigm}, which eliminates the need for separate modular supervision or pre-calibration, enabling unified error propagation to supervise all components.}
	\label{fig:causal_analysis}
 \vspace{-4mm}
\end{figure}
\section{Introduction}
Predicting dense 3D semantic occupancy is a fundamental task in 3D vision, providing a fine-grained voxel representation of scene geometry and semantics \cite{yang2017semantic, tong2023scene, wang2023openoccupancy, tian2024occ3d}. The challenge of performing this prediction from vision alone, an approach known as vision-based 3D semantic occupancy prediction (VisionOcc), has recently become a focal point of research. By leveraging only commodity cameras, VisionOcc is pivotal for a wide range of 3D applications, serving as a comprehensive digital replica of the environment for tasks like analysis, simulation, and interactive visualization \cite{huang2023tri,tong2023scene,wei2023surroundocc,tian2024occ3d,li2023fbocc,chen2024alocc,chen2025rethinking}.

VisionOcc unifies the challenges of feed-forward 3D reconstruction and dense semantic understanding. Existing pipelines typically decompose this task into two meta-phases~\cite{huang2023tri,li2023fbocc,ma2023cotr,liu2024surroundsdf,chen2024alocc}. The initial 2D-to-3D transformation uses large receptive field operators like view lifting~\cite{philion2020lift,chen2024alocc} or cross attention~\cite{huang2023tri,li2023fbocc} to construct an initial 3D feature volume. Subsequently, a 3D representation learning phase employs operators with a small receptive field (\eg, 3D convolutions or local self-attention) to refine 3D features and produce the final prediction. Our work targets the 2D-to-3D transformation phase, which is more critical and error-prone. \cref{fig:motivation_a} showcases a primary failure mode where features of one class (\eg, a 2D `car') are erroneously transformed to the 3D location of another (\eg, a `tree'). This creates a flawed learning objective, forcing the model to learn a spurious association between `car' features and a `tree' label. Such \textit{semantic ambiguity} is a principal obstacle to achieving high performance.

The prevailing VisionOcc methods, typically based on Lift-Splat-Shoot (LSS), employ a modular approach for the 2D-to-3D transformation (\cref{fig:motivation_b})~\cite{philion2020lift,ma2023cotr,chen2024alocc}. This involves supervising geometry with a proxy depth loss while relying on fixed, pre-calibrated camera parameters and a static lifting map. However, this modularity raises critical questions about robustness and optimality. First, it is susceptible to compounding errors; for example, the reliance on fixed camera parameters makes the system vulnerable to real-world perturbations like camera jitter during motion. More fundamentally, the optimality of such proxy supervision is questionable. An intermediate representation ideal for depth estimation may not be optimal for the final semantic occupancy task, inherently limiting the transformation's expressive power due to this objective misalignment.
This motivates our central research question: \textbf{Can we devise an end-to-end supervision framework}\footnote{Here, ``end to end" refers to a unified supervision scheme for a process, distinct from the concept of a monolithic network architecture.} that holistically optimizes the entire 2D-to-3D transformation, enabling unified semantic-aware error backpropagation and allowing traditionally fixed modules to become fully learnable?

We approach this problem from a causal perspective. In VisionOcc, the 2D image semantics are the ``cause" of the final 3D semantic ``effect". Semantic misalignment arises from disrupted information flow from cause to effect (\cref{fig:motivation_a}). Therefore, instead of correcting the erroneous output, we propose to directly regularize the information flow itself. We posit that a 3D prediction for a given class should be influenced predominantly by 2D image regions of that same class. To enforce this information flow, we leverage gradients as a proxy, inspired by prior work~\cite{wang2023label,zhou2024visual,jiang2021layercam}. For each semantic class, the gradient of its aggregated 3D features is computed \textit{\wrt} the 2D feature map, producing a saliency-like map of 2D influence. This map is then directly supervised with the ground truth 2D segmentation mask. As shown in \cref{fig:motivation_c}, this establishes a principled, end-to-end supervision signal for the 2D-to-3D transformation, enabling holistic optimization of all its components.

To fully leverage our end-to-end supervision, we introduce a more expressive and learnable 2D-to-3D view transformation, termed the Semantic Causality-Aware Transformation (SCAT). A key challenge is that directly supervising gradients is inherently unstable. Therefore, the entire SCAT module is designed to constrain its gradient flow to a stable [0, 1] range. Specifically, SCAT introduces three targeted designs: \textit{i) Channel-Grouped Lifting:} To better disentangle semantics, we move beyond LSS's uniform weighting and apply distinct learnable weights to different groups of feature channels.
\textit{ii) Learnable Camera Offsets:} To mitigate motion-induced pose errors, we introduce learnable offsets to the camera parameters, which are implicitly supervised by the 2D-3D semantic consistency enforced by our causal loss.
\textit{iii) Normalized Convolution:} Finally, we employ a normalized convolution to densify the sparse 3D features from LSS~\cite{li2023fb}, ensuring this final step also adheres to our global gradient stability requirement.

Our contributions are as follows: \textbf{i)} We systematically analyze the 2D-to-3D transformation in VisionOCC, identifying a critical failure mode we term semantic ambiguity. We provide a theoretical analysis proving how the modularity of prior methods leads to error propagation, offering clear guidance for future work.
\textbf{ii)} To address these problems, we propose the Causal Loss that directly regularizes the information flow of the 2D-to-3D transformation. This enables true end-to-end optimization of all constituent modules, mitigating error accumulation and making previously fixed components, such as camera parameters, fully learnable.
\textbf{iii)} We instantiate our principles in the Semantic Causality-Aware Transformation, a novel 2D-to-3D transformation architecture. SCAT incorporates Channel-Grouped Lifting, Learnable Camera Offsets, and Normalized Convolution to explicitly tackle the challenges of semantic confusion, camera perturbations, and limited learnability.
\textbf{iv)} Extensive experiments show our method significantly boosts existing models, achieving a 3.2\% absolute mIoU gain on BEVDet. Furthermore, it demonstrates superior robustness to camera perturbations, reducing the relative performance drop on BEVDet from a severe -32.2\% to a mere -7.3\%.

\section{Related Work}
\subsection{Semantic Scene Completion}
Semantic Scene Completion (SSC) \cite{liu2018see,li2019rgbd,chen20203d,li2023voxformer,zhang2023occformer} refers to the task of simultaneously predicting both the occupancy and semantic labels of a scene.
Existing methods can be classified into indoor and outdoor approaches based on the scene type, with the former focusing on occupancy and semantic label prediction in controlled environments \cite{liu2018see, li2019rgbd, chen20203d}, while the latter shifts towards more complex outdoor settings, particularly in the context of autonomous driving \cite{behley2019semantickitti, cao2022monoscene}.
The core principle of SSC lies in its ability to infer the unseen, effectively bridging gaps in incomplete observations with accurate semantic understanding.
MonoScene \cite{cao2022monoscene} introduces a 3D SSC framework that infers dense geometry and semantics from a single monocular RGB image. 
VoxFormer \cite{li2023voxformer} presents a Transformer-based semantic scene completion framework that generates complete 3D volumetric semantics from 2D images by first predicting sparse visible voxel queries and then densifying them through self-attention with a masked autoencoder design.
OccFormer \cite{zhang2023occformer} introduces a dual-path transformer network for 3D semantic occupancy prediction, efficiently processing camera-generated 3D voxel features through local and global pathways, and enhancing occupancy decoding with preserve-pooling and class-guided sampling to address sparsity and class imbalance.

\subsection{Vision-based 3D Occupancy Prediction}
Vision-based 3D Occupancy Prediction \cite{tong2023scene,tian2024occ3d,wang2023openoccupancy,huang2023tri, liu2024fully, lu2023octreeocc,huang2024gaussianformer,huang2024selfocc,shi2024occupancy,chen2025rethinking} aims to predict the spatial and semantic features of 3D voxel grids surrounding an autonomous vehicle from image data. 
This task is closely related to SSC, emphasizing the importance of multi-perspective joint perception for effective autonomous navigation.
TPVFormer \cite{huang2023tri} is prior work that lifts image features into the 3D TPV space by leveraging an attention mechanism \cite{vaswani2017attention,li2022bevformer}. Different from TPVFormer, which relies on sparse point clouds for supervision, subsequent studies, including OccNet \cite{tong2023scene}, SurroundOcc \cite{wei2023surroundocc}, Occ3D \cite{tian2024occ3d}, and OpenOccupancy \cite{wang2023openoccupancy}, have developed denser occupancy annotations by incorporating temporal information or instance-level labels.
Methods such as BEVDet \cite{huang2021bevdet}, FBOcc \cite{li2023fbocc}, COTR \cite{ma2023cotr}, and ALOcc \cite{chen2024alocc} leverage depth-based LSS \cite{philion2020lift,li2022bevdepth,li2023bevstereo,li2024fast} for explicit geometric transformation, demonstrating strong performance.
Some methods \cite{pan2024renderocc, zhang2023occnerf,boeder2025gaussianflowocc,jiang2024gausstr,yan2024renderworld,zheng2024veon,boeder2024langocc} have explored rendering-based methods that utilize 2D signal supervision, thereby bypassing the need for 3D annotations.
Furthermore, recent research like \cite{tong2023scene, li2024viewformer, liu2024letoccflow,chen2024alocc,chen2024adaocc,liaocascadeflow,boeder2024occflownet} introduced 3D occupancy flow prediction, which addresses the movement of foreground objects in dynamic scenes by embedding 3D flow information to capture per-voxel dynamics. Unlike the above methods, we analyze the 2D-to-3D transformation process from the perspectives of error propagation and semantic causal consistency, proposing a novel approach that enhances causal consistency.

\begin{figure}[t]
    \centering 
    \setlength{\abovecaptionskip}{0pt}
    \includegraphics[width=0.99\linewidth]{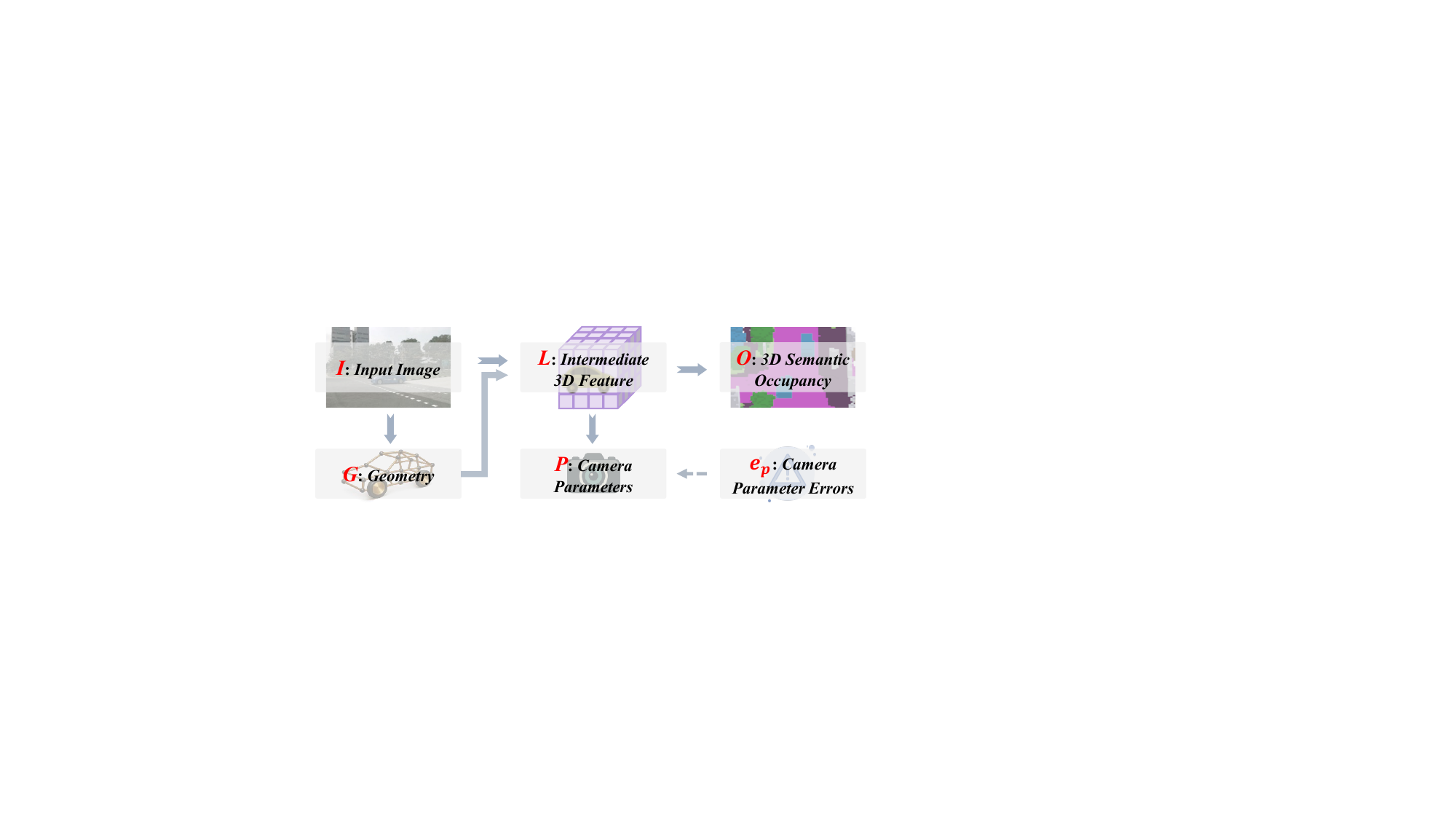}
    \vspace{1mm}
    \caption{\textbf{The Causal Structure of VisionOcc.} It illustrates the dependency chain from the image input $\textbf{I}$ to the semantic occupancy output $\textbf{O}$. $\textbf{G}$: geometry for 2D-to-3D transformation. $P$: camera intrinsic and extrinsic. $\textbf{L}$: intermediate 3D feature. $e_P$: errors in camera parameters.
    }
    \label{fig:causal_structure}
    \vspace{-4.4mm}
\end{figure}

\begin{figure*}[t]
    \centering 
    \includegraphics[width=0.78\linewidth]{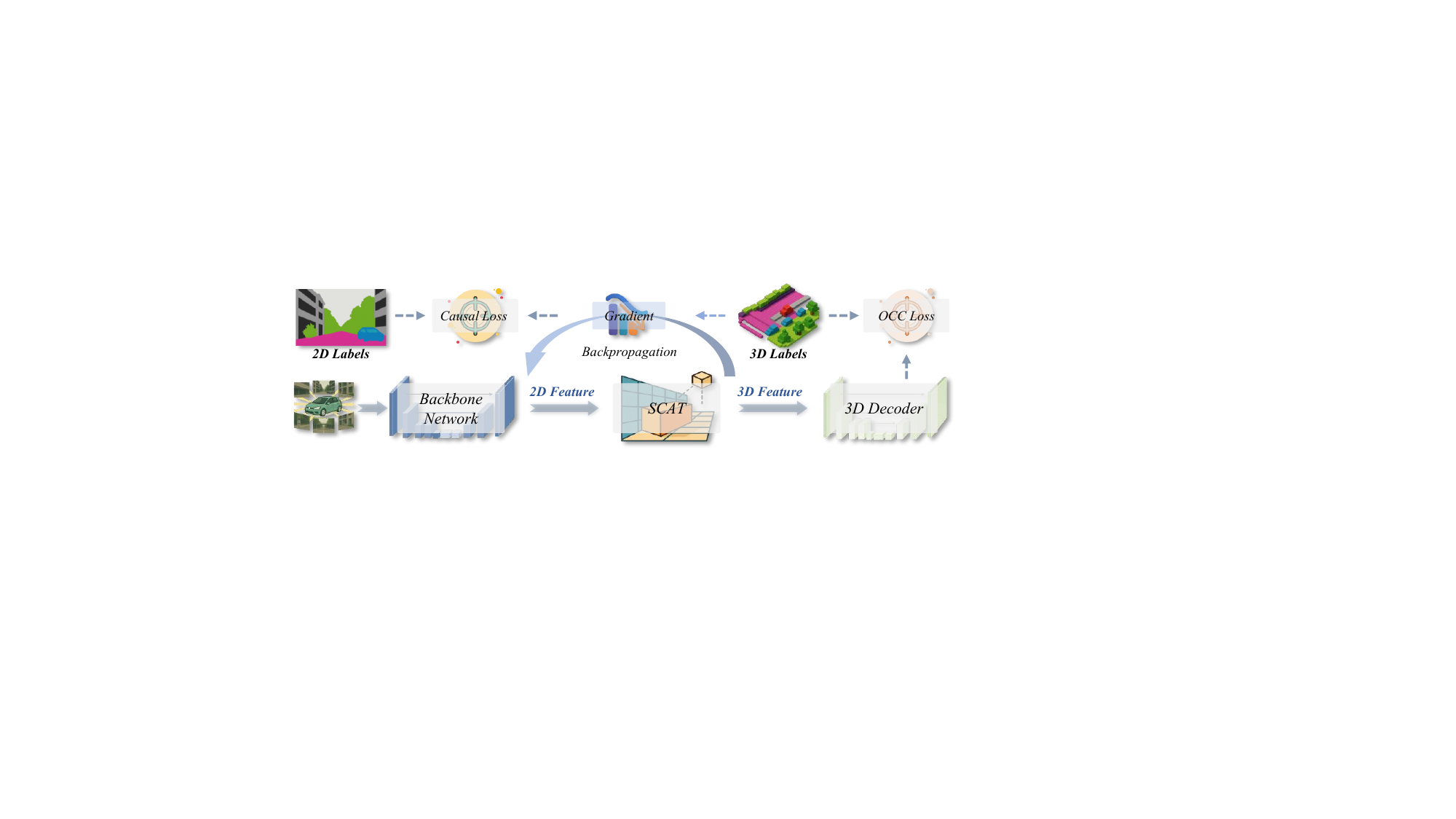}
    \caption{\textbf{The Overall Framework of the Proposed Semantic Causality-Aware VisionOcc.}
    The proposed framework consists of three primary components: a backbone network for extracting 2D features, an SCAT module for transforming these features into 3D space, and an Encoder-Decoder network for learning 3D semantics. The SCAT module is supervised by our causal loss.
    }
    \label{fig:framework}
    \vspace{-3mm}
\end{figure*}

\section{Method}
\label{sec:method}
\paragraph{Preliminary.}
VisionOcc predicts a dense 3D semantic occupancy grid from surround-view images, modeled as a causal dependency chain as shown in~\cref{fig:causal_structure}. This process involves key variables: input image \( \textbf{I} \), estimated geometry \( \textbf{G} \), camera parameters \( P \) (intrinsic \& extrinsic), potential camera parameter errors \( e_P \), intermediate 3D feature \( \textbf{L} \), and final occupancy output \( \textbf{O} \). In LSS-based methods \cite{huang2021bevdet,ma2023cotr,chen2024alocc}, the pipeline starts with the image backbone extracting features \( \mathbf{f}_i = F_i(\textbf{I}), \mathbf{f}_i \in \mathbb{R}^{U,V,C} \); geometry \(  \mathbf{G} \in \mathbb{R}^{U,V,D} \) is predicted as a probability distribution over discretized depth bins \( D \), \ie, $\mathbf{G} = F_g(\mathbf{f}_i)$; 2D features are then transformed to 3D via an outer product \( \mathbf{f}_L' = \mathbf{G} \otimes \mathbf{f}_i, \mathbf{f}_L' \in \mathbb{R}^{U,V,D,C} \); camera parameters \( P \) map these to voxel coordinates  \( R_P(u, v, d) \rightarrow (h, w, z) \in [0, H-1] \times [0, W-1] \times [0, Z-1] \), yielding \( \mathbf{f}_L \in \mathbb{R}^{H \times W \times Z }\), where \( H \times W \times Z \) defines the occupancy grid resolution; finally, \( \mathbf{L} \) is decoded to produce the semantic occupancy output: \(
\textbf{O} = F_o(\mathbf{f}_L), \textbf{O} \in \mathbb{R}^{H \times W \times Z \times S} 
\), where \( F_o \) is the decoding function and \( S \) is the number of semantic classes. The prediction $\mathbf{O}$ is supervised by the ground-truth $\mathbf{\tilde{O}}$. 

\subsection{Error Propagation in Depth-Based LSS} \label{sec:theo}
\begin{theorem}
\label{theorem:lss_limitation}
In Depth-Based LSS methods with a fixed 2D-to-3D mapping $M_{fixed}$, inherent mapping error $\delta M$ leads to gradient deviations, preventing convergence to an $\epsilon$-optimal solution. This is formalized as:
\begin{align}
    \!M_{fixed} = M_{ideal} + \delta M \!\implies\! \nabla_{\theta} L_{LSS} \neq \nabla_{\theta} L_{ideal},
\end{align}
where $M_{ideal}$ is the ideal mapping and $L_{ideal}$ is the loss function using $M_{ideal}$.
\end{theorem}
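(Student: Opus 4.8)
The plan is to treat the whole VisionOcc pipeline as a composition and carry out a first-order sensitivity analysis of the training gradient, isolating the contribution of the mapping operator. Writing the scalar loss as $L = \ell(F_o(\mathbf{f}_L), \tilde{\mathbf{O}})$ with $\mathbf{f}_L = M(\mathbf{f}_L')$ and $\mathbf{f}_L' = \mathbf{G}(\theta) \otimes \mathbf{f}_i(\theta)$, I would first observe that the 2D-to-3D mapping $R_P$ is a linear scatter/gather operator on the lifted tensor $\mathbf{f}_L'$: it merely relocates and sums entries, so $M$ is representable by a sparse, camera-dependent matrix whose Jacobian with respect to $\mathbf{f}_L'$ is $M$ itself. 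Substituting $M_{fixed} = M_{ideal} + \delta M$ and applying the chain rule then gives
\[
\nabla_{\theta} L_{LSS} = J_\theta^\top M_{fixed}^\top g = J_\theta^\top M_{ideal}^\top g + J_\theta^\top \delta M^\top g = \nabla_{\theta} L_{ideal} + J_\theta^\top \delta M^\top g,
\]
where $g$ is the loss gradient back-propagated to $\mathbf{f}_L$ through the decoder $F_o$, and $J_\theta = \partial \mathbf{f}_L'/\partial \theta$. This exhibits the gradient deviation as the explicit residual term $J_\theta^\top \delta M^\top g$, which is nonzero exactly when $\delta M^\top g \notin \ker(J_\theta^\top)$.

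Second, I would argue that this deviation is generically nonzero rather than something the learnable blocks can silently absorb. The mechanism is the semantic-ambiguity failure mode of Figure~\ref{fig:motivation_a}: $\delta M$ mis-routes a 2D feature to a voxel whose supervised label in $\tilde{\mathbf{O}}$ differs from the true one, so the induced residual is correlated with $g$ and does not fall into $\ker(J_\theta^\top)$ for a non-degenerate feature extractor. I would make this concrete by evaluating at the ideal stationary point $\theta^\star$, where $\nabla_{\theta} L_{ideal}(\theta^\star)=0$, and showing $\nabla_{\theta} L_{LSS}(\theta^\star) = J_\theta^\top \delta M^\top g \neq 0$, so $\theta^\star$ is not a fixed point of descent on $L_{LSS}$.

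Third, I would convert the gradient mismatch into a genuine optimality gap. Under standard smoothness and local strong convexity of $L_{ideal}$ near $\theta^\star$, any stationary point $\theta^{LSS}$ of $L_{LSS}$ obeys $\nabla_{\theta} L_{ideal}(\theta^{LSS}) = -J_\theta^\top \delta M^\top g \neq 0$, and a first-order lower bound then yields $L_{ideal}(\theta^{LSS}) - L_{ideal}(\theta^\star) \geq \epsilon$ with $\epsilon$ controlled by $\|\delta M\|$ and the local curvature. This is precisely the failure to reach an $\epsilon$-optimal solution claimed in the statement.

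The hardest step is the second one. The algebraic identity in step one makes ``the gradients differ'' almost immediate for generic $\delta M$, so the substantive content is ruling out compensation by the learnable maps, i.e. proving $\delta M^\top g$ cannot be forced into $\ker(J_\theta^\top)$ for any admissible $\theta$. This is where the many-to-one, depth-spreading structure of the outer product $\mathbf{G} \otimes \mathbf{f}_i$ matters: because one 2D feature is distributed across depth bins and several pixels collapse onto shared voxels, the routing error entangles features destined for semantically distinct voxels and cannot be undone by any pre-distortion of $\mathbf{f}_i$ or $\mathbf{G}$. I expect to need either a genericity assumption on the feature Jacobian or an explicit two-voxel construction whose supervision cannot be jointly satisfied under $M_{fixed}$, which is also what motivates replacing the fixed mapping with the learnable, causally supervised transformation proposed later.
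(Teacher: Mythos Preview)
Your route differs from the paper's. The paper works at the \emph{coordinate} level: it writes $\mathbf{X}=M_{fixed}(\mathbf{x},\hat d,\mathbf{K})$ versus $\mathbf{X}_{ideal}=M_{ideal}(\mathbf{x},d,\mathbf{K}_{ideal})$, bounds $\|\Delta\mathbf{X}\|$ by the mapping error $\Delta_M$, invokes $L_{Lift}$-Lipschitz continuity of the lift to bound the 3D feature deviation $\|F_{3D}(\mathbf{X})-F_{3D}(\mathbf{X}_{ideal})\|$, and then observes that because $M_{fixed}$ is not trainable one has $\partial\mathbf{X}/\partial\theta=0$, so the chain-rule gradient is computed on the deviated features and hence $\nabla_\theta L_{LSS}\neq\nabla_\theta L_{ideal}$. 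It stops there; the $\epsilon$-optimality claim is asserted rather than derived. Your linear scatter-operator view and your steps two and three (ruling out absorption of $\delta M^\top g$ into $\ker J_\theta^\top$; converting the gradient mismatch into a quantitative gap via local strong convexity) are considerably sharper and actually attempt what the paper only states.

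There is, however, a real slip in your step one that propagates into step two. The decoder gradient $g=\partial\ell/\partial\mathbf{f}_L$ is evaluated at the forward activation $\mathbf{f}_L=M_{fixed}\mathbf{f}_L'$, whereas $\nabla_\theta L_{ideal}$ back-propagates a \emph{different} vector $g_{ideal}$ evaluated at $M_{ideal}\mathbf{f}_L'$. Hence $J_\theta^\top M_{ideal}^\top g\neq\nabla_\theta L_{ideal}$, and your identity $\nabla_\theta L_{LSS}=\nabla_\theta L_{ideal}+J_\theta^\top\delta M^\top g$ is false as written; the correct residual is
\[
\nabla_\theta L_{LSS}-\nabla_\theta L_{ideal}=J_\theta^\top M_{ideal}^\top(g-g_{ideal})+J_\theta^\top\delta M^\top g,
\]
with both terms of order $\|\delta M\|$. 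Consequently your stationary-point claim $\nabla_\theta L_{LSS}(\theta^\star)=J_\theta^\top\delta M^\top g$ also drops the first term. The fix is straightforward---control $g-g_{ideal}$ by a Lipschitz bound on the decoder and loss, which is exactly the device the paper uses one layer upstream---but without it your clean algebraic decomposition does not hold, and the kernel argument in step two would need to address both residuals, not just $\delta M^\top g$.
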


\begin{proof}
\textbf{Mapping Error Quantification:}
Let $\mathbf{x} \in \mathbb{R}^2$ be 2D pixel coordinates and $d(\mathbf{x})$ be the ground truth depth. Estimated depth is $\hat{d}(\mathbf{x}) = d(\mathbf{x}) + \epsilon_d(\mathbf{x})$, with $\epsilon_d(\mathbf{x})$ as depth error. Let $\mathbf{K}_{ideal} \in \mathbb{R}^{3 \times 3}$ be the ideal camera intrinsics, and $\mathbf{K} = \mathbf{K}_{ideal} + \epsilon_K$ be the estimated camera intrinsics with error $\epsilon_K$. The fixed mapping is $M_{fixed} = M_{ideal} + \delta M$, where $\delta M$ encompasses errors from various sources, including depth estimation error $\epsilon_d(\mathbf{x})$ and camera extrinsic error $\epsilon_K$. We assume the total mapping error is bounded $\|\delta M\|_F \leq \Delta_M < \infty$.  The 3D coordinates are:
\begin{align}
    \mathbf{X} &= M_{fixed}(\mathbf{x}, \hat{d}(\mathbf{x}), \mathbf{K}) \\
    \mathbf{X}_{ideal} &= M_{ideal}(\mathbf{x}, d(\mathbf{x}), \mathbf{K}_{ideal}) \\
    \Delta \mathbf{X} &= \mathbf{X} - \mathbf{X}_{ideal} = \delta M(\mathbf{x}, \hat{d}(\mathbf{x}), \mathbf{K}),
\end{align}
where $\|\Delta \mathbf{X}\|_2 \leq C \cdot \Delta_M$ for bounded inputs.

\textbf{Feature Space Deviation and Loss:}
Let $F_{2D}(\mathbf{x})$ be 2D features and $F_{3D}(\cdot) = Lift(F_{2D}(\mathbf{x}), \cdot)$. Assuming $Lift$ is $L_{Lift}$-Lipschitz continuous, the 3D feature deviation is:
\begin{align}
    \|F_{3D}(\mathbf{X}) - F_{3D}(\mathbf{X}_{ideal})\|_F &\leq L_{Lift} \|\mathbf{X} - \mathbf{X}_{ideal}\|_2 \\
    &\leq L_{Lift} \|\Delta \mathbf{X}\|_2 \leq \Delta_{F_{3D}}.\notag
\end{align}
The loss function is $L_{LSS} = \mathcal{L}(P_{3D}(\mathbf{X}), GT_{3D})$, where $P_{3D}(\mathbf{X}) = Seg_{3D}(F_{3D}(\mathbf{X}))$.

\textbf{Gradient Deviation and Optimization Limit:}
The gradient of $L_{LSS}$ \textit{\wrt} parameters $\theta$ is given by chain rule. However, since $M_{fixed}$ is fixed, $\frac{\partial \mathbf{X}}{\partial \theta} = 0$. Thus, the gradient becomes:
\begin{align}
    \nabla_{\theta} L_{LSS} = \frac{\partial L_{LSS}}{\partial P_{3D}} \frac{\partial P_{3D}}{\partial F_{3D}} \left( \frac{\partial F_{3D}}{\partial F_{2D}} \frac{\partial F_{2D}}{\partial \theta} \right).
\end{align}
Due to $\Delta_{F_{3D}} > 0$, the computed gradient $\nabla_{\theta} L_{LSS}$ is based on the deviated feature space, i.e.,
\begin{align}
    \nabla_{\theta} L_{LSS}(\theta) &= \nabla_{\theta} \mathcal{L}(P_{3D}(\mathbf{X}), GT_{3D}) \\
    &\neq \nabla_{\theta} \mathcal{L}(P_{3D}(\mathbf{X}_{ideal}), GT_{3D}) = \nabla_{\theta} L_{ideal}(\theta).\notag
\end{align}
The gradient deviation prevents mapping's direct optimization and limits convergence to an $\epsilon$-optimal solution.
\end{proof}
The theoretical analysis reveals that the inherent error in the fixed 2D-to-3D mapping of Depth-Based LSS methods fundamentally hinders gradient-based optimization from achieving optimal performance.
\begin{table}[t]
    \centering
    \small
    \setlength{\tabcolsep}{0.02\linewidth}
    \begin{tabular}{lccc}
        \toprule
        \textbf{Method} & \textbf{mIoU} & \textbf{mIoU\textsubscript{D}} & \textbf{IoU} \\
        \midrule
        Depth-Based LSS & 44.5 & 40.4 & 78.9 \\
        SCL-Aware LSS & 50.5\textcolor{red2}{$\uparrow$6.0} & 46.9\textcolor{red2}{$\uparrow$6.5} & 85.7\textcolor{red2}{$\uparrow$6.8} \\
        \bottomrule
    \end{tabular}
    \vspace{-2mm}
    \caption{\textbf{Performance of Depth-Based LSS \vs SCL-Aware LSS on Occ3D (in Ideal Conditions).} BEVDetOcc is the baseline.
    }
    \vspace{-5mm}
    \label{tab:logic}
\end{table}
\subsection{Semantic Causal Locality in VisionOcc}
As revealed in our theoretical analysis (\cref{sec:theo}), Depth-Based LSS methods suffer from inherent error propagation due to their fixed 2D-to-3D mapping, which limits optimization efficacy and potential performance. 
To overcome these limitations, we particularly focus on strengthening the semantic causality of the 2D-to-3D transformation.
We argue that VisionOcc’s 2D-to-3D semantic occupancy prediction should exhibit \textit{semantic causal locality (SCL)} for robust perception in autonomous driving. Ideally, 2D causes should drive 3D semantic effects.
For instance, a predicted “car” at 3D location \( (h, w, z) \) should originate from a matching 2D image region of “car”. Per the causal chain, camera parameters \( P \) and estimated geometry \(\mathbf{G} \) enable this dependency, with \(\mathbf{G} \) being crucial to maintain SCL.

Next, we formulate the ideal SCL condition. For a 2D pixel \((u, v)\) with semantic label \(s\), the projection probability \(p_d\) (the value of \(\mathbf{G}\) at this coordinate and depth \(d \in D\)) should be high if its corresponding 3D ground-truth semantic is \(s\), and low otherwise:
\[
p_d \propto \mathds{1}(\mathbf{\tilde{O}}(R_P(u, v, d) + e_P) = s),
\]
where \(\mathds{1}\) is the indicator function, and \(e_P\) represents the potential coordinate transformation error caused by factors such as camera pose error. In practice, \(p_d\) acts as a weight multiplied by 2D features (\cref{eq:p_d}), enabling a probabilistic mapping that supports differentiable backpropagation.

\noindent\textbf{Limitations of Depth-Based LSS in SCL.}
Depth-Based LSS does not fully account for semantic causal consistency. It only preserves semantic causal locality intuitively under ideal conditions where \( e_P = 0 \) and depth estimation is perfectly accurate. However, with coordinate transformation errors \( e_P \), even a high \( p_d \) for the ideal depth may project 2D semantics to incorrect 3D locations, \ie,
\[
\mathbf{\tilde{O}}(R_P(u, v, d) + 0) = s, ~ \text{but} ~ \mathbf{\tilde{O}}(R_P(u, v, d) + e_P) \neq s.
\]
This misalignment causes 2D semantics $s$ (\eg, “car”) to link with wrong 3D semantics (\eg, “tree”), causing semantic ambiguity and hindering training. Moreover, depth-based LSS often propagates semantics to surface points, weakening semantic propagation to occluded regions \cite{chen2024alocc}.

\noindent\textbf{Empirical Validation.}
We conduct an empirical study to validate our analysis of SCL, comparing two ideal geometric transformations. Using BEVDetOcc~\cite{huang2021bevdet} as the baseline, we replace its estimated depth-based geometry for LSS with: \textit{i)} ground-truth LiDAR depths; \textit{ii)} SCL-Aware geometry, which computes \( p_d \) from 2D and 3D semantic ground truths, where for \( (u, v, d, s) \), \( p_d = 1 \) if \( \mathbf{\tilde{O}}(R_P(u, v, d) ) = s \), else \( p_d = 0 \). We evaluate their performance in semantic occupancy prediction.
\cref{tab:logic} demonstrates that 2D-to-3D Transformation achieves significant performance improvements over depth-based LSS in ideal conditions, validating the benefits of semantic causal locality.

\noindent\textbf{Summary.}
Building on the above analysis, we propose our solution to enforce semantic causality constraints during training. The overall framework is shown in \cref{fig:framework}.
\subsection{Semantic Causality-Aware Causal Loss}
\label{sec:causal-loss}
For lifting methods like LSS, we could directly supervise the transformation geometry \( \mathbf{G} \) using the causal semantic geometry derived from ground-truth labels (as described in the previous section). However, we aim to enhance the lifting method in the next section, rendering direct supervision impractical. Thus, we design a gradient-based approach to enforce semantic causality.

We begin within the LSS framework. For a 2D pixel feature at location \( (u, v) \), LSS multiplies it by the depth-related transformation probability \( p_d \) and projects it to the 3D coordinate corresponding to depth \( d \):
\begin{equation}
\mathbf{f}_L(R_P(u, v, d)) = p_d(u, v, d) \cdot \mathbf{f}_i(u, v, d).
\label{eq:p_d}
\end{equation}
Here, \( \mathbf{f}_L(R_P(u, v, d)) \in \mathbb{R}^C \) represents the 3D voxel feature at the projected location, with \( e_P \) omitted for notational simplicity. \( \mathbf{f}_i(u, v, d) \in \mathbb{R}^C \) denotes the 2D image feature at location \( (u, v) \).
We backpropagate gradients from \( \mathbf{f}_L \) to \( \mathbf{f}_i \), \ie,
\begin{equation}
\frac{\partial \sum \mathbf{f}_L(R_P(u, v, d))}{\partial \mathbf{f}_i(u, v, d)} = p_d \cdot \mathbf{I},
\end{equation}
where \( \mathbf{I} \) is the all-ones vector.

For each semantic class \( s \), we aggregate the features \( \mathbf{f}_L \) at all 3D positions where the ground truth class equals \( s \). This aggregation is backpropagated to the 2D features \( \mathbf{f}_i \), yielding a gradient map \( \nabla_s \in \mathbb{R}^{U \times V \times C} \) for class \( s \):
\begin{equation}
\nabla_s(u, v, c) = \sum_{(h', w', z') \in \Omega_s} \frac{\partial \sum \mathbf{f}_L(h', w', z', c)}{\partial \mathbf{f}_i(u, v, c)},
\end{equation}
where \( \Omega_s = \{(h', w', z') \mid O(h', w', z') = s\} \) is the set of 3D positions with semantic label \( s \), and \( c \) indexes the feature channels. Averaging over channel \( C \) produces an attention map \( A_s \in \mathbb{R}^{U \times V} \):
\begin{equation}
A_s(u, v) = \frac{1}{C} \sum_{c=1}^C \nabla_s(u, v, c).
\end{equation}

Finally, we enforce per-pixel constraints using 2D ground-truth labels with a binary cross-entropy loss:
\begin{equation}
\begin{aligned}
L^s_{bce} =& -\frac{1}{U \cdot V} \sum_{u, v} \big[ Y_s(u, v) \log A_s(u, v)  \\
+&(1 - Y_s(u, v)) \log (1 - A_s(u, v)) \big],
\end{aligned}
\end{equation}
where \( Y_s(u, v) \in \{0, 1\} \) is the 2D ground-truth label for semantic class \( s \) at pixel \( (u, v) \).

The gradient computation can be performed using the automatic differentiation calculator like \textit{torch.autograd}, requiring \( S \) backward passes to iterate over the \( S \) semantic classes. This incurs a computational overhead scaling linearly with the class amount \( S \).
To mitigate this overhead, we reformulate the loss computations in terms of the expectation of an unbiased estimator \cite{hoffman2019robust}. We define the expected BCE loss across all semantic classes \( s \in \{1, \dots, S\} \) as:
\begin{equation}
\mathbb{E}[L^s_{bce}] = \frac{1}{S} \sum_{s=1}^{S} L^s_{bce}.
\end{equation}

Based on this relationship, we uniformly sample a single semantic class \( s \) during training:
\begin{equation}
L_{causal} = L^s_{bce}, \quad s \sim \text{Uniform}(1, S).
\label{eq:causal}
\end{equation}
This sampling preserves the unbiased nature of the gradient and loss estimates and reduces the computational cost to \( \frac{1}{S} \). \( L_{\text{causal}} \) focuses on enhancing the geometric transformation, serving as an auxiliary term to complement the occupancy loss (case-by-case, \eg, cross-entropy in BEVDet \cite{huang2021bevdet}).

\subsection{Semantic Causality-Aware Transformation}
\label{sec:semantic-causal-lifting}
We propose semantic causality-aware 2D-to-3D transformation to enhance 2D-to-3D lifting, as shown in \cref{fig:modules}. \cref{eq:causal} constrains the geometry \( \mathbf{G} \) using 2D and 3D semantics. This overcomes the rigid, per-location hard alignment of geometric probabilities (\eg, using LiDAR depth supervision) in prior methods \cite{li2022bevdepth,li2023bevstereo,huang2021bevdet,li2023fbocc,chen2024alocc}. It enables advanced lifting designs, addressing errors from camera pose and other distortions.

\subsubsection{Channel-Grouped Lifting}
\label{sec:channel-grouped-lifting}

Vanilla LSS applies uniform weights to all 2D feature channels. We argue this is trivial as 2D and 3D features have distinct locality biases. For instance, a 2D “car” edge may capture “tree” semantics via convolution, but in 3D, these objects are distant. Uniform weighting both semantics causes ambiguity. Since different channels typically encode distinct semantics, we group the feature channels and learn unique weights for each group:
\begin{equation}
\resizebox{.88\linewidth}{!}{$
\mathbf{f}_{L,g}(R_P(u, v, d)) = \omega_{g,d} \cdot \mathbf{f}_{i,g}(u, v, d), ~ g \in \{1, \dots, N_g\},
$}
\end{equation}
where \( \mathbf{f}_{L,g} \in \mathbb{R}^{C/N_g} \) and \( \mathbf{f}_{i,g} \in \mathbb{R}^{C/N_g} \) are the 3D and 2D features for group \( g \). \( \omega_{g,d} \) is the learned weight for group \( g \), replacing \( p_d \) which uniformly lifts all channels. \( N_g \) is the number of groups. This preserves semantic distinction, ensuring channel-specific causal alignment.

\begin{figure}[t]
	\centering
  
        \includegraphics[width=0.99\linewidth]{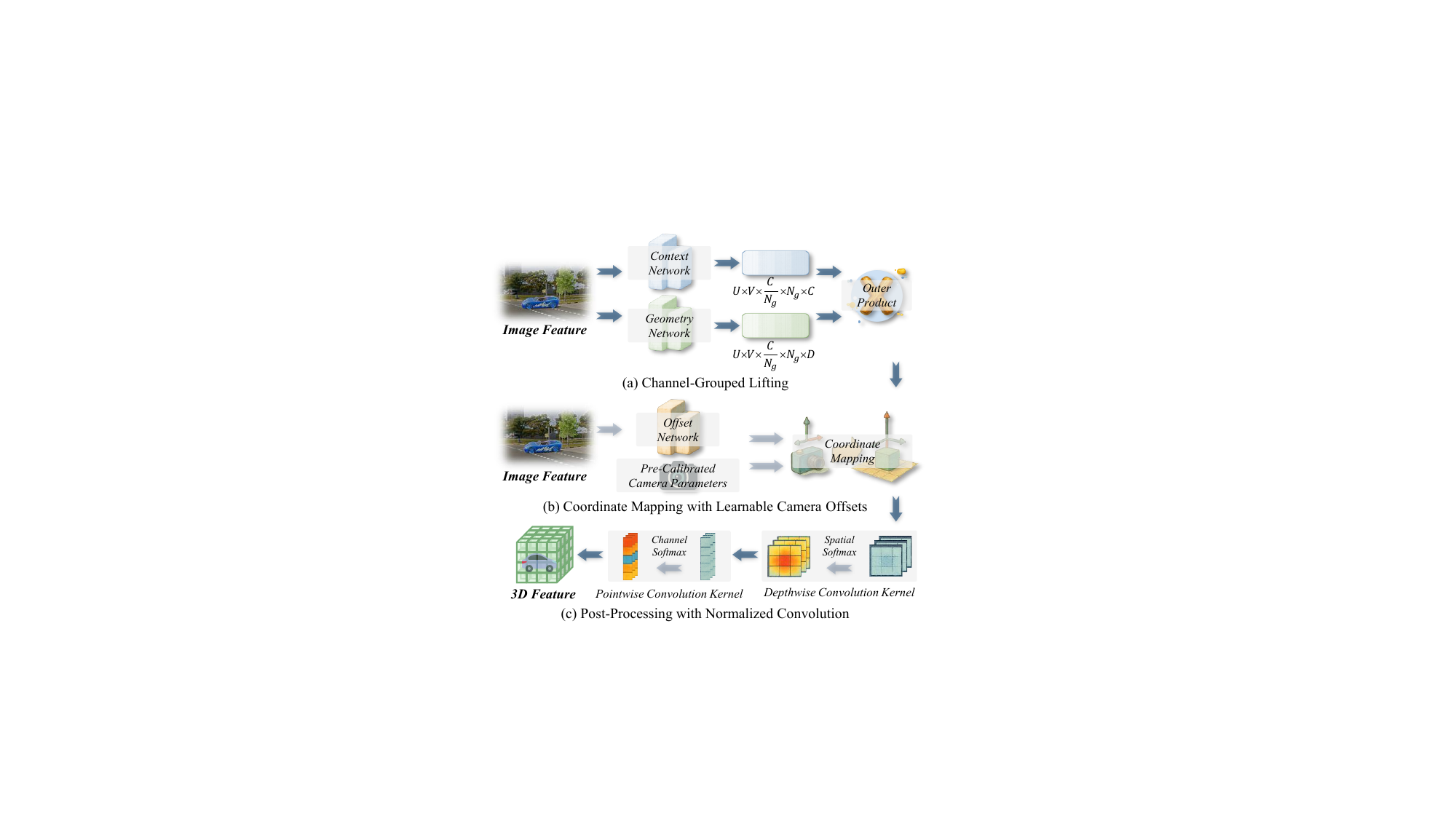}
  
	\caption{\textbf{The Detailed Structure of the Proposed Modules in Semantic Causality-Aware 2D-to-3D Transformation.}
    There are three key components, including (a): Channel-Grouped Lifting, (b): Learnable Camera Offsets, and (c): Normalized Convolution, enabling accurate and robust 2D-to-3D transformation.
 }
	\label{fig:modules}
 \vspace{-4mm}
\end{figure}

\subsubsection{Learnable Camera Offsets}
\label{sec:learnable-camera-parameters}
To address camera parameter errors, especially pose inaccuracies, we introduce learnable offsets into camera parameters. First, we ensure the lifting process is coordinate-differentiable. This is crucial for the offsets to receive gradients and adapt during training. The transformation of 2D image coordinates \( (u, v) \) and depth \( d \) to 3D voxel coordinates can be represented as matrix multiplication:
\begin{equation}
[h,w,z]^T = P \cdot [u \cdot d, v \cdot d, d, 1]^T,
\end{equation}
where \( P \in \mathbb{R}^{3 \times 4} \) is the camera projection matrix combining intrinsics and extrinsics. LSS typically rounds floating-point coordinates to integers, rendering them non-differentiable. Following ALOcc \citep{chen2024alocc}, we use the soft filling to enable differentiability \textit{\wrt} position. This method calculates distances between floating-point 3D coordinates and their eight surrounding integer coordinates. These distances serve as weights to distribute a 2D feature at \( (u, v, d) \) across multiple 3D locations. Lifting can be rewritten to
\begin{equation}
\begin{aligned}
\mathbf{f}_{L,g}(h',w&',z') = \omega_{g,d} \cdot \omega_{h',w',z'} \cdot \mathbf{f}_{i,g}(u, v, d),\\ &\forall (h',w',z') \in \text{neighbors},
\end{aligned}
\end{equation}
where \( \omega_{h',w',z'} \) are trilinear interpolation weights.

Next, we propose learning two offsets. First, we directly predict an offset applied to the camera parameters:
\begin{equation}
P:= P + \Delta P, \quad \Delta P = F_{offset1}(\mathbf{f}_i, P),
\end{equation}
where \( \Delta P \) is the predicted parameter offset, and $F_{offset1}$ denotes the network. Second, we estimate per-position offsets for each \( (u, v, d) \) in the image coordinate system:
\begin{equation}
\begin{aligned}
(u, v, d&)  := (u+\Delta u, v+\Delta v, d+\Delta d),\\
(\Delta u,& \Delta v, \Delta d)= F_{offset2}(\mathbf{f}_i(u, v, d)),
\end{aligned}
\end{equation}
where $F_{offset2}$ is another network. These offsets enable the model to adaptively compensate for camera parameter errors \(e_P\), improving geometric accuracy while preserving semantic causal locality under such errors.

\subsubsection{Normalized Convolution}
\label{sec:gradient-normalized-conv}

Prior work \cite{li2023fb} notes that the direct mapping in LSS yields sparse 3D features. To address this, an intuitive solution is to use local feature propagation operators (\eg, convolutions) with causal loss supervision for preserving semantic causality. However, vanilla convolutions lack gradient constraints during causal loss computation, producing poor results. We address this by normalizing convolution weights to keep gradient map values in [0, 1]. Given the difficulty of normalizing standard convolution weights, we follow MobileNet and ConvNext to adopt depthwise (spatial) and pointwise (channel) decomposition. For the depthwise kernel $W_{\text{spatial}} \in \mathbb{R}^{3 \times 3 \times 3 \times C}$, we apply softmax across spatial dimensions ($h, w, z$) per channel $c$:
\begin{equation}
\resizebox{.887\linewidth}{!}{$
\!W_{\text{spatial}}'[h,\! w,\! z,\! c]\!=\!\frac{\exp(W_{\text{spatial}}[h, w, z, c])}{\sum_{h'\!, w'\!, z'} \!\exp(W_{\text{spatial}}[h'\!, w'\!, z'\!, c])}\!.\!
$}
\end{equation}
For the pointwise kernel $W_{\text{channel}} \in \mathbb{R}^{C \times C}$, we apply softmax across input channels per output channel:
\begin{equation}
W_{\text{channel}}'[c_{\text{in}}, c_{\text{out}}] = \frac{\exp(W_{\text{channel}}[c_{\text{in}}, c_{\text{out}}])}{\sum_{c_{\text{out}}'} \exp(W_{\text{channel}}[c_{\text{in}}, c_{\text{out}}'])}.
\end{equation}
Specifically, we use transposed convolution for depthwise convolutions. It diffuses features from non-zero to zero positions, but the zero position does not affect others. We prove in the \textit{supplement} that the derived gradient mask remains within [0, 1] even with our semantic causality-aware 2D-to-3D transformation.

\begin{figure}[t]
    \centering
    \includegraphics[width=0.72\linewidth]{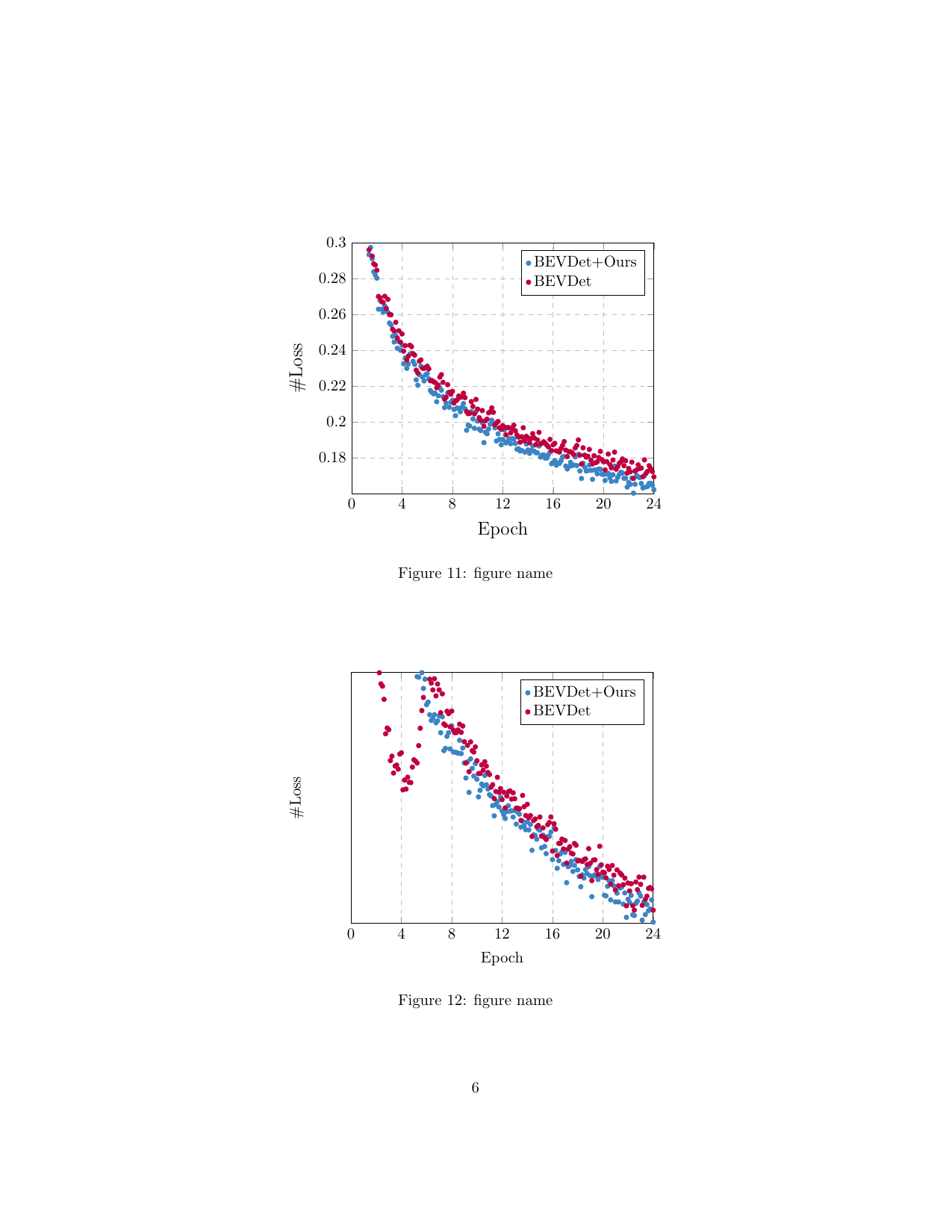}
    \vspace{-2mm}
    \caption{\textbf{Occupancy Loss Curves with/without Our Method.} Our method reduces training loss through semantic causal 2D-to-3D geometry transformation, as shown in the comparison before \textbf{(red)} and after \textbf{(blue)} its application.}
    \vspace{-2mm}
    \label{fig:loss_curve}
\end{figure}

\subsection{Validation of Gradient Error Mitigation}
\cref{fig:loss_curve} shows the occupancy loss curves during training for BEVDet, comparing performance with and without our method.  The results show that BEVDet integrated with our approach (blue line) achieves a significantly faster and steeper loss reduction compared to the original BEVDet (red line).
This empirical evidence validates our theoretical analysis of gradient error mitigation.  As Theorem \cref{theorem:lss_limitation} formalizes, Depth-Based LSS methods are inherently limited by gradient deviations due to fixed 2D-to-3D mapping errors.  In contrast, our method aims to alleviate this by enabling a learnable mapping and incorporating a causal loss. It indicates that by mitigating gradient error through semantic causality-aware 2D-to-3D transformation, our approach facilitates more efficient gradient-based learning, leading to faster convergence and a lower loss curve.

\begin{table}[t]
    \scriptsize
    \setlength{\abovecaptionskip}{0pt}
    \setlength{\tabcolsep}{0.01\linewidth}
    \begin{center}
        \begin{tabular}{l|cc|cc|cc}
            \toprule
            Method  & mIoU $\uparrow$ & Drop & mIoU\textsubscript{D}$\uparrow$ & Drop & IoU$\uparrow$ & Drop \\
            \midrule
            \midrule
            BEVDetOcc \cite{huang2022bevdet4d}  & 37.1 & \multirow{2}{*}{-32.3\%} & 30.2 & \multirow{2}{*}{-49.0\%} & 70.4 & \multirow{2}{*}{-8.5\%} \\
            BEVDetOcc \textit{w/ Noise} & 25.1 & & 15.4 & & 64.4 & \\
            \midrule
            \rowcolor{pink!8} \textbf{BEVDetOcc+Ours}   & 38.3 & \multirow{2}{*}{\textbf{-7.3\%}} & 31.5 & \multirow{2}{*}{\textbf{-10.8\%}} & 71.2 & \multirow{2}{*}{\textbf{-1.4\%}} \\
            \rowcolor{pink!8} \textbf{BEVDetOcc+Ours \textit{w/ Noise}} & 35.5 &\multirow{-2}{*}{\textbf{-7.3\%}}  & 28.1 &\multirow{-2}{*}{\textbf{-10.8\%}} & 70.2 &  \multirow{-2}{*}{\textbf{-1.4\%}} \\
            \midrule
            \midrule
            ALOcc \cite{chen2024alocc}  & 40.1 & \multirow{2}{*}{-21.9\%} & 34.3 & \multirow{2}{*}{-28.6\%} & 70.2 & \multirow{2}{*}{-8.4\%} \\
            ALOcc \textit{w/ Noise} & 31.3 & & 24.5 & & 64.3 & \\
            \midrule
            \rowcolor{pink!8} \textbf{ALOcc+Ours}   & 40.9 & \multirow{2}{*}{\textbf{-3.3\%}} & 35.5 & \multirow{2}{*}{\textbf{-4.8\%}} & 70.7 & \multirow{2}{*}{\textbf{-1.0\%}} \\
            \rowcolor{pink!8} \textbf{ALOcc+Ours \textit{w/ Noise}} & 39.6 & \multirow{-2}{*}{\textbf{-3.3\%}} & 33.8 & \multirow{-2}{*}{\textbf{-4.8\%}}& 70.0 &\multirow{-2}{*}{\textbf{-1.0\%}}  \\
            \bottomrule
        \end{tabular}
    \end{center}
    \vspace{-1mm}
    \caption{\textbf{Performance Comparison on the Occ3D Dataset with Gaussian Noise Added to Camera Parameters.} The ``Drop (\%)" columns show degradation, with our methods (BEVDetOcc+Ours, ALOcc+Ours) achieving much smaller drops (\eg, -7.3\% mIoU vs. -32.4\% for BEVDetOcc).}
    \vspace{-4.4mm}
\label{tab:add_noise}
\end{table}

\section{Experiment}
\subsection{Experimental Setup}
\textbf{Dataset.} 
In this study, we leverage the Occ3D-nuScenes dataset \cite{caesar2020nuscenes,tian2024occ3d}, a comprehensive dataset with diverse scenes for autonomous driving research. 
It encompasses 700 scenes for training, 150 for validation, and 150 for testing. 
Each scene integrates a 32-beam LiDAR point cloud alongside 6 RGB images, captured from multiple perspectives encircling the ego vehicle. 
Occ3D \cite{tian2024occ3d} introduces voxel-based annotations, covering a spatial extent of -40 m to 40 m along the X and Y axes, and -1 m to 5.4 m along the Z axis, with a consistent voxel size of 0.4 m across all dimensions. 
Occ3D delineates 18 semantic categories, comprising 17 distinct object classes plus an \textit{empty} class to signify unoccupied regions. 
Following \cite{tian2024occ3d,li2023fbocc,ma2023cotr,chen2024alocc}, we evaluate the occupancy prediction performance with mIoU across 17 semantic object categories, mIoU$_{D}$ for 8 dynamic categories, and occupied/unoccupied IoU for scene geometry.

\noindent\textbf{Implementation Details.}
We integrate our approach into BEVDet \cite{huang2022bevdet4d} and ALOcc \cite{chen2024alocc} for performance evaluation. The model parameters, image, and BEV augmentation strategies are retrained as the original. For ALOcc, we remove its ground-truth depth denoising module to adapt to our method. We use multiple convolutional layers to predict the two camera parameter offsets. The loss weight of \( L_{\text{causal}} \) is set to 0.02 in the main experiments. We optimize using AdamW \cite{loshchilov2018fixing} with a learning rate of \( 2 \times 10^{-4} \), a global batch size of 16, and for 24 epochs. Experiments use single-frame surrounding images, emphasizing improvements from enhanced 2D-to-3D transformations.

\begin{figure}[t]
	\centering
    \includegraphics[width=1.\linewidth]{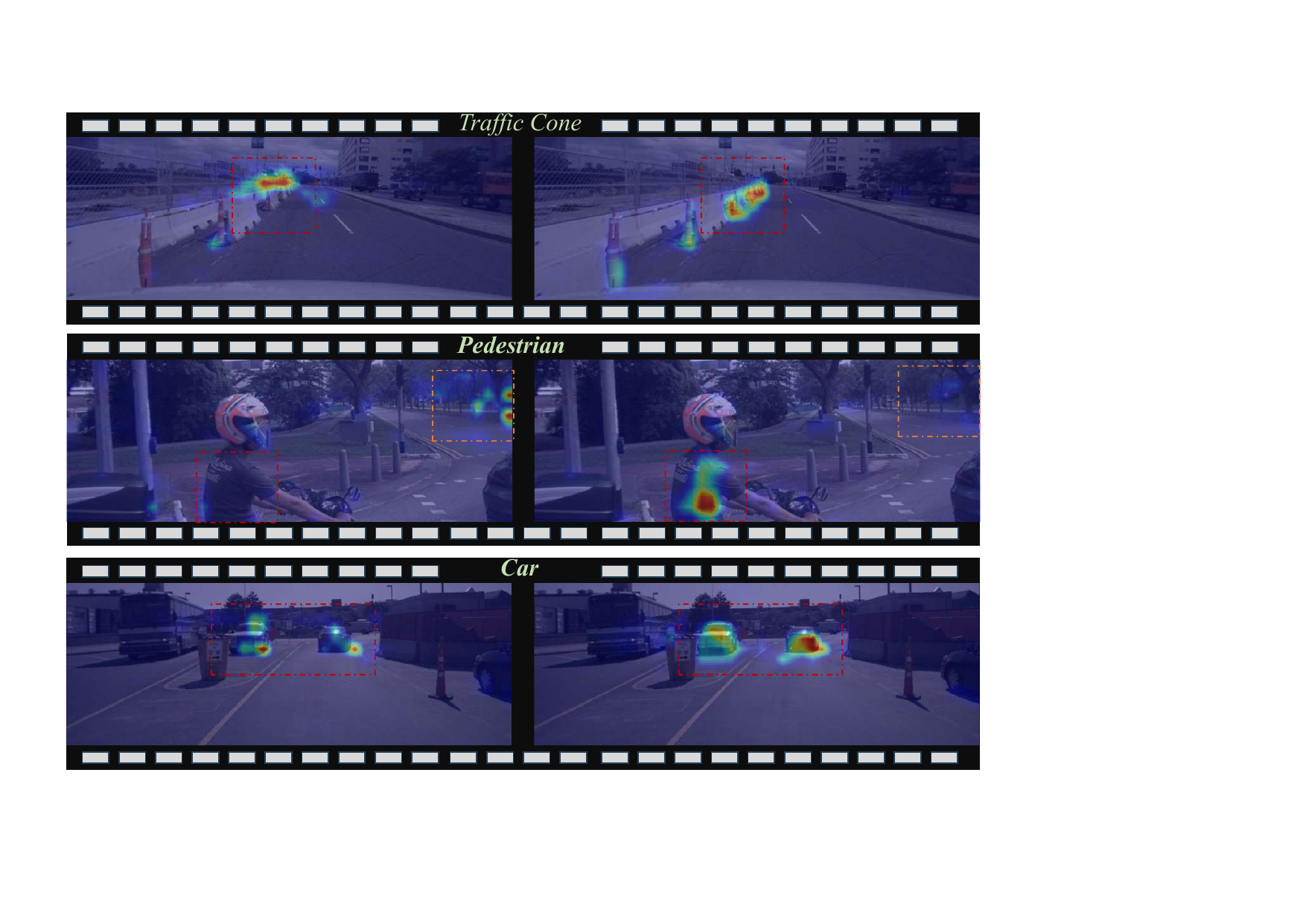}\\
    \vspace{-1mm}
    \begin{minipage}{0.49\linewidth}
        \centering
        \small
        (a) BEVDet
    \end{minipage}
    \hfill
    \begin{minipage}{0.49\linewidth}
        \centering
        \small
        (b) BEVDet + Ours
    \end{minipage}
    \vspace{-2mm}
	\caption{\textbf{Visualization of 2D-to-3D Semantic Causal Consistency Using LayerCAM \cite{jiang2021layercam}.} We enhance BEVDet with our method for comparison. Attention maps are computed for critical traffic classes: ``traffic cone", ``pedestrian", and ``car". Areas of greatest difference are marked with boxes. Each class-specific localization highlights our method’s precise focus over vanilla BEVDet, showing improved semantic alignment.}
	\label{fig:vis}
 \vspace{-4.4mm}
\end{figure}

\subsection{Evaluation of Camera Perturbation Robustness} 
To assess our method's robustness under extreme noise, we add Gaussian noise with (0.1 variances) to camera parameters in training and testing, as shown in ~\cref{tab:add_noise}. Compared to vanilla BEVDetOcc and ALOcc, our methods show smaller performance drops. BEVDetOcc+Ours has a mIoU drop of -7.3\% \vs -32.3\% for BEVDetOcc, and ALOcc+Ours drops -3.3\% \vs -21.9\% for ALOcc, demonstrating enhanced resilience to noisy parameters. This effectively counters motion-induced errors, benefiting self-driving tasks.

\subsection{Semantic Causality Visualization}
\label{sec:causality-vis}

As shown in the \cref{fig:vis}, we visualize 3D-to-2D semantic causal consistency using LayerCAM \cite{jiang2021layercam}. We backpropagate the final 3D semantic occupancy predictions of distinct classes to the 2D feature maps fed into SCAT. Notably, we are the first to apply LayerCAM for cross-dimensional analysis. \cref{fig:vis} (b) shows our method precisely focuses on class-specific locations. This confirms LayerCAM’s cross-dimensional effectiveness. Our approach surpasses vanilla BEVDet (\cref{fig:vis} (a)) in targeting class-associated objects. This proves improved semantic localization.

\subsection{Benchmarking with Previous Methods}
\label{sec:benchmark-comp}
As shown in \cref{tab:occ3d}, we compare our method with leading 3D semantic occupancy prediction approaches on Occ3D \cite{tian2024occ3d}. Specifically, compared to baseline models BEVDet and ALOcc, our method achieves significant improvements in mIoU, mIoU\textsubscript{D}, and IoU. For instance, the \textbf{BEVDetOcc+Ours} variant achieves an mIoU of \textbf{38.3}, surpassing BEVDetOcc~\cite{huang2022bevdet4d} by \textbf{1.2}, while improving mIoU\textsubscript{D} by \textbf{1.3} and IoU by \textbf{1.2}. Similarly, \textbf{ALOcc+Ours} shows gains of~\textbf{0.8} in mIoU, \textbf{1.1} in mIoU\textsubscript{D}, and \textbf{0.5} in IoU over ALOcc~\cite{chen2024alocc}. These results validate the superiority of our semantic causality-aware 2D-to-3D transformation.

\begin{table}[t]
    \footnotesize
    \setlength{\abovecaptionskip}{0pt}
    \setlength{\tabcolsep}{0.008\linewidth}
    \begin{center}
     \resizebox{0.475\textwidth}{!}{
        \begin{tabular}{l|c|c|c|c|c}
            \toprule
            Method &Backbone &Input Size& mIoU & mIoU\textsubscript{D}  & IoU \\
            \midrule
            \midrule
            MonoScene \cite{cao2022monoscene} &ResNet-101 &$928 \times 1600$ &6.1 & 5.4& - \\
            CTF-Occ \cite{tian2024occ3d} &ResNet-101&$928 \times 1600$&28.5 &27.4 &- \\
            TPVFormer \cite{huang2023tri}&ResNet-101&$928 \times 1600$&27.8 &27.2 &- \\

            COTR~\cite{ma2023cotr} &ResNet-50&$256 \times 704 $& 39.1 & 33.8 & 69.6 \\
            ProtoOcc~\cite{kim2024protoocc}  &ResNet-50&$256 \times 704$& 39.6 & 34.3 & - \\
            LightOcc-S~\cite{zhang2024lightweight}&ResNet-50  &$256 \times 704$& 37.9 & 32.4 & - \\
            DHD-S~\cite{wu2024deep} &ResNet-50 &$256 \times 704$& 36.5 & 30.7 & - \\
            FlashOCC~\cite{yu2023flashocc} &ResNet-50&$256 \times 704$& 32.0 & 24.7 & 65.3 \\
            FB-Occ \cite{li2023fbocc}  &ResNet-50&$256 \times 704$& 35.7 & 30.9 & 66.5 \\
    
            \midrule
            \midrule
            BEVDetOcc \cite{huang2022bevdet4d} &ResNet-50&$256 \times 704$ & 37.1 & 30.2 & 70.4 \\
            \rowcolor{pink!8} \textbf{BEVDetOcc+Ours} &ResNet-50 &$256 \times 704$ & \textbf{38.3{\color{red2}~$\uparrow$1.2}} & \textbf{31.5{\color{red2}~$\uparrow$1.3}}& \textbf{71.2{\color{red2}~$\uparrow$1.2}} \\
            \midrule
            ALOcc \cite{chen2024alocc}&ResNet-50 &$256 \times 704$ & 40.1 & 34.3 & 70.2 \\
            \rowcolor{pink!8} \textbf{ALOcc+Ours} &ResNet-50&$256 \times 704$  & \textbf{40.9{\color{red2}~$\uparrow$0.8}} & \textbf{35.5{\color{red2}~$\uparrow$1.1}}& \textbf{70.7{\color{red2}~$\uparrow$0.5}}\\
            \bottomrule
    \end{tabular}
        }
    \end{center}

        \caption{\textbf{Comparison of 3D Semantic Occupancy Prediction Using Single Frame on the Occ3D Dataset, Evaluated mIoU, mIoU\textsubscript{D}, and IoU Metrics.} Performance gains are indicated by red arrows {\color{red2}~$\uparrow$}. Our proposed approach (\textbf{+Ours}) consistently demonstrates superior enhancement over existing methods.}
   \label{tab:occ3d}
    \vspace{-4mm}
\end{table}

\subsection{Ablation Study}
\label{sec:ablation}
\noindent\textbf{Effect of Causal Loss.}
We first investigate the effectiveness of the proposed Causal Loss in enhancing the occupancy prediction performance. 
As demonstrated in \cref{tab:ablation}, the impact of the proposed Causal Loss is evaluated through a series of experiments. 
Specifically, using BEVDetOcc as the baseline (\textbf{Exp.~0}), we conducted two ablation studies: one removing the depth supervision loss (\textbf{Exp.~1}), and another incorporating the proposed Causal Loss (\textbf{Exps.~2, 3}). 
The results reveal that the removal of the depth supervision loss leads to marginal decrease in performance, whereas the addition of the proposed Causal Loss yields a significant improvement. 
This suggests that Causal Loss facilitates superior 2D-to-3D transformation and ultimately enhances the precision of 3D semantic occupancy prediction. Comparing \textbf{Exp.~3} to \textbf{Exp.~2}, the Unbiased Estimator simplifies the Causal Loss computation, reducing training overhead.

\noindent\textbf{Effect of Each Module.}
In \cref{tab:ablation} (\textbf{Exps. 5-8}), we systematically validate the effectiveness of the three proposed modules. We first remove the two post-lifting convolutional layers from the original BEVDet (\textbf{Exp. 4}), as they serve a similar role to our proposed modules in refining volume features. Subsequently, we incrementally integrate the three proposed modules (Channel-Grouped Lifting, Learnable Camera Offsets, and Normalized Convolution) into the baseline model. To ensure gradient flow for Learnable Camera Offsets, we introduce the Soft Filling strategy from ALOcc~\cite{chen2024alocc} in \textbf{Exp. 6}, enabling effective training of the camera parameter offsets in \textbf{Exp. 7}. The results show progressive performance improvements with each added component (\textbf{Exps. 5, 7, 8}), confirming the efficacy of the proposed SCAT method. Additionally, the proposed modules incur acceptable computational overhead.

 Please refer to the supplementary material for more comprehensive experimental results.

\begin{table}[t]
    \footnotesize
    \setlength{\abovecaptionskip}{0pt}
    \setlength{\tabcolsep}{0.01\linewidth}
    \begin{center}
    \resizebox{0.478\textwidth}{!}{
        \begin{tabular}{c|l|cc|c|c|c}
            \toprule
            Exp. & Method & mIoU & Diff. & mIoU\textsubscript{D} & IoU &Latency\\
            \midrule
            \midrule
            0 & Baseline~(BEVDetOcc) \cite{huang2022bevdet4d} & 37.1 & - & 30.2 & 70.4 &416/125\\
            \midrule
            1 & \textit{w/o} Depth Sup & 36.8 & {\color{blue}-0.3} & 29.6 & 70.3 &414/125\\
            \midrule
            \rowcolor{pink!8} 2 & + \textbf{\textit{Causal Loss}} & 37.6 & {\color{red2}+0.8} & 31.0 & 70.1 &450/125\\
            \rowcolor{pink!8} 3 & + \textbf{\textit{Unbiased Estimator}} & 37.5 & {\color{blue}-0.1} & 30.7 & 70.5 &417/125\\
            \midrule
            4 & \textit{w/o} Post Conv & 37.3  &{\color{blue}-0.2} & 30.7 & 70.2 &379/122\\
            \midrule
            \rowcolor{pink!8} 5 & + \textbf{\textit{Channel-Grouped Lifting}} & 37.6 & {\color{red2}+0.3} & 30.7 & 70.7 &419/128\\
            6 & + Soft Filling & 37.6 & {\color{blue}-} & 30.6 & 70.7 &434/149\\
            \rowcolor{pink!8} 7 & + \textbf{\textit{Learnable Camera Offset}} & 37.9 & {\color{red2}+0.3} & 31.1 & 71.0 &446/150\\
            \rowcolor{pink!8} 8 & + \textbf{\textit{Normalized Convolution}} & \textbf{38.3} & {\color{red2}+0.4} & \textbf{31.5} & \textbf{71.2} &466/159\\
       
            \bottomrule
        \end{tabular}
        }
    \end{center}
    \vspace{-0.5mm}
    \caption{\textbf{Ablation Study of 3D Semantic Occupancy Prediction on Occ3D.} We comprehensively evaluated the impact of the individual strategy (\textbf{bolded rows}) proposed in our paper, with BEVDetOcc as the baseline. The final column reports single-frame training/inference latency (ms) on an RTX 4090 GPU.
    }
    \label{tab:ablation}
    \vspace{-4.2mm}
\end{table}

\vspace{-0.5ex}
\section{Conclusion}
\vspace{-0.5ex}
In this paper, we introduced a novel approach leveraging causal principles to address the limitation that ignored the reliability and interpretability by existing methods. 
By exploring the causal foundations of 3D semantic occupancy prediction, we propose a causal loss that enhances semantic causal consistency.
In addition, we develop the SCAT module with three main components: Channel-Grouped Lifting, Learnable Camera Offsets and Normalized Convolution. 
This approach effectively mitigates transformation inaccuracies arising from uniform mapping weights, camera perturbations, and sparse mappings.
Experiments demonstrate that our approach achieves significant improvements in accuracy, robustness to camera perturbations, and semantic causal consistency in 2D-to-3D transformations.

{
    \small
    \bibliographystyle{ieeenat_fullname}
    \bibliography{main}
}

\end{document}